  \newcommand{\mathmat}[1]{\mathbfit{#1}}
  \newcommand{\mathvec}[1]{\mathbfit{#1}}
  \newcommand{\mathvecgreek}[1]{\mathbfit{#1}}
  \newcommand{\mathmatgreek}[1]{\mathbfit{#1}}
  \newcommand{\mathrv}[1]{\mathsfit{#1}}
  \newcommand{\mathrvvec}[1]{\mathbfsfit{#1}}
  \newcommand{\mathrvgreek}[1]{\mathsfit{#1}}
  \newcommand{\mathrvvecgreek}[1]{\mathbfsfit{#1}}
  \newcommand{\mathmat}[1]{\mathbfit{#1}}
  \newcommand{\mathvec}[1]{\mathbfit{#1}}
  \newcommand{\mathvecgreek}[1]{\mathbfit{#1}}
  \newcommand{\mathmatgreek}[1]{\mathbfit{#1}}
  \newcommand{\mathrv}[1]{\mathsfit{#1}}
  \newcommand{\mathrvvec}[1]{\mathbfsfit{#1}}
  \newcommand{\mathrvgreek}[1]{\mathsfit{#1}}
  \newcommand{\mathrvvecgreek}[1]{\mathbfsfit{#1}}
\declaretheoremstyle[
    spaceabove    = \parsep,
    spacebelow    = \parsep,
    %postheadspace = \newline,
    %headpunct     = {},
    bodyfont      = \normalfont\itshape,
]{theoremsty}
\declaretheorem[name=Proposition,style=theoremsty, mdframed={style = coloredstyle}]{proposition}
\declaretheorem[name=Theorem,    style=theoremsty, mdframed={style = coloredstyle}, numbered=no]{theorem*}
\declaretheorem[name=Proposition,style=theoremsty, mdframed={style = coloredstyle}, numbered=no]{proposition*}
\declaretheorem[name=Corollary,  style=theoremsty, mdframed={style = coloredstyle}, numbered=no]{corollary*}
\declaretheorem[name=Lemma,      style=theoremsty, mdframed={style = coloredstyle}, numbered=no]{lemma*}
\declaretheorem[name=Open Problem,style=theoremsty, mdframed={style = coloredstyle}, numbered=no]{openproblem*}
\declaretheorem[name=Conjecture,  style=theoremsty, mdframed={style = coloredstyle}, numbered=no]{conjecture*}
\declaretheoremstyle[
    spaceabove=\parsep,
    spacebelow=\parsep,
    bodyfont=\normalfont,
]{normalsty}
\declaretheorem[name=Remark,      style=normalsty, numbered=no]{remark*}
\declaretheorem[name=Definition,  style=normalsty, mdframed={style = coloredstyle}, numbered=no]{definition*}
\declaretheorem[name=Assumption,  style=normalsty, mdframed={style = coloredstyle}, numbered=no]{assumption*}
\newenvironment{proofsketch}{%
  \proof%
  }{\endproof}
\providecommand\IfFormatAtLeastTF{\@ifl@t@r\fmtversion}
  \renewcommand*{\backref}[1]{}
  \renewcommand*{\backrefalt}[4]{({\footnotesize%
  \ifcase #1 Not cited.%
    \or page~#2%
    \else pages #2%
  \fi%
  })}
\definecolor{linkcolor}{HTML}{6929C4}
\definecolor{citecolor}{HTML}{0043CE}
\crefname{assumption}{Assumption}{Assumption}
\crefname{condition}{Condition}{Condition}
\crefname{framedtheorem}{Theorem}{Theorems}
\crefname{framedproposition}{Proposition}{Propositions}
\crefname{framedlemma}{Lemma}{Lemmas}
\def\adl@drawiv#1#2#3{%
        \hskip.5\tabcolsep
        \xleaders#3{#2.5\@tempdimb #1{1}#2.5\@tempdimb}%
                #2\z@ plus1fil minus1fil\relax
        \hskip.5\tabcolsep}
\newcommand{\cdashlinelr}[1]{%
  \noalign{\vskip\aboverulesep
           \global\let\@dashdrawstore\adl@draw
           \global\let\adl@draw\adl@drawiv}
  \cdashline{#1}
  \noalign{\global\let\adl@draw\@dashdrawstore
           \vskip\belowrulesep}}
\DeclareMathOperator*{\argmax}{arg\,max}
\newcommand*\xbar[1]{%
  \hbox{%
    \vbox{%
      \hrule height 0.6pt % The actual bar
      \kern0.33ex%         % Distance between bar and symbol
      \hbox{%
        \kern-0.1em%      % Shortening on the left side
        \ensuremath{#1}%
        \kern-0.1em%      % Shortening on the right side
      }%
    }%
  }%
}
\def\do#1{\csdef{v#1}{\mathvec{#1}}}
\def\do#1{\csdef{v#1}{\mathvecgreek{\csname#1\endcsname}}}
\def\do#1{\csdef{rv#1}{\mathrv{#1}}}
\def\do#1{\csdef{rv#1}{\mathrvgreek{\csname#1\endcsname}}}
\def\do#1{\csdef{rvv#1}{\mathrvvec{#1}}}
\def\do#1{\csdef{rvv#1}{\mathrvvecgreek{\csname#1\endcsname}}}
\def\do#1{\csdef{rvm#1}{\mathrvvecgreek{#1}}}
\def\do#1{\csdef{m#1}{\mathmat{#1}}}
\def\do#1{\csdef{m#1}{\mathmatgreek{\csname#1\endcsname}}}
\icmltitlerunning{Modeling Choice via Self-Attention}
\begin{document}

\twocolumn[
\icmltitle{Modeling Choice via Self-Attention}

% It is OKAY to include author information, even for blind
% submissions: the style file will automatically remove it for you
% unless you've provided the [accepted] option to the icml2024
% package.

% List of affiliations: The first argument should be a (short)
% identifier you will use later to specify author affiliations
% Academic affiliations should list Department, University, City, Region, Country
% Industry affiliations should list Company, City, Region, Country

% You can specify symbols, otherwise they are numbered in order.
% Ideally, you should not use this facility. Affiliations will be numbered
% in order of appearance and this is the preferred way.
\icmlsetsymbol{equal}{*}

\begin{icmlauthorlist}
\icmlauthor{Joohwan Ko}{kaist}
\icmlauthor{Andrew A. Li}{cmu}
% \icmlauthor{Firstname3 Lastname3}{comp}
% \icmlauthor{Firstname4 Lastname4}{sch}
% \icmlauthor{Firstname5 Lastname5}{yyy}
% \icmlauthor{Firstname6 Lastname6}{sch,yyy,comp}
% \icmlauthor{Firstname7 Lastname7}{comp}
% %\icmlauthor{}{sch}
% \icmlauthor{Firstname8 Lastname8}{sch}
% \icmlauthor{Firstname8 Lastname8}{yyy,comp}
%\icmlauthor{}{sch}
%\icmlauthor{}{sch}
\end{icmlauthorlist}

\icmlaffiliation{kaist}{KAIST, Daejeon, Korea}
\icmlaffiliation{cmu}{Carnegie Mellon University, Pittsburgh, Pennsylvania, United States}
% \icmlaffiliation{sch}{School of ZZZ, Institute of WWW, Location, Country}

\icmlcorrespondingauthor{Joohwan Ko}{juanko@kaist.ac.kr}
\icmlcorrespondingauthor{Andrew A. Li}{aali1@cmu.edu}

% You may provide any keywords that you
% find helpful for describing your paper; these are used to populate
% the "keywords" metadata in the PDF but will not be shown in the document
\icmlkeywords{Machine Learning, ICML}

\vskip 0.3in
    ]

% this must go after the closing bracket ] following \twocolumn[ ...

% This command actually creates the footnote in the first column
% listing the affiliations and the copyright notice.
% The command takes one argument, which is text to display at the start of the footnote.
% The \icmlEqualContribution command is standard text for equal contribution.
% Remove it (just {}) if you do not need this facility.

\printAffiliationsAndNotice{}  % leave blank if no need to mention equal contribution
% \printAffiliationsAndNotice{\icmlEqualContribution} % otherwise use the standard text.

\begin{abstract}
Models of choice are a fundamental input to many now-canonical optimization problems in the field of Operations Management, including assortment, inventory, and price optimization. Naturally, accurate estimation of these models from data is a critical step in the application of these optimization problems in practice. Concurrently, recent advancements in deep learning have sparked interest in integrating these techniques into choice modeling. However, there is a noticeable research gap at the intersection of deep learning and choice modeling, particularly with both theoretical and empirical foundations. Thus motivated, we first propose a choice model that is the first to successfully (both theoretically and practically) leverage a modern neural network architectural concept (self-attention). 
Theoretically, we show that our attention-based choice model is a low-rank generalization of the Halo Multinomial Logit (Halo-MNL) model.
%a recent model that parsimoniously captures irrational choice effects and has seen empirical success. 
We prove that whereas the Halo-MNL requires $\Omega(m^2)$ data samples to estimate, where $m$ is the number of products, our model supports a natural nonconvex estimator (in particular, that which a standard neural network implementation would apply) which admits a near-optimal stationary point with $O(m)$ samples. Additionally, we establish the first realistic-scale benchmark for choice model estimation on real data, conducting the most extensive evaluation of existing models to date, thereby highlighting our model's superior performance.

\end{abstract}

\section{Introduction}

\begin{table*}[t]
\caption{Comparison of theoretical guarantees, experimental evidence and irrationality in current and previous studies}
\label{table:related_works}
\vskip 0.15in
\begin{center}
\begin{small}
\begin{sc}
\begin{tabular}{lccccr} 
\toprule
Method & \multicolumn{1}{p{2cm}}{\centering Theoretical \\ Guarantee} & 
\multicolumn{1}{p{2cm}}{\centering Experimental \\ Evidence}  & Irrationality \\
\midrule
Mixed MNL \citep{mcfadden2000mixed}& {\color{green}\ding{51}}& {\color{red}\ding{55}} &
{\color{red}\ding{55}}
\\
Halo MNL \citep{maragheh2018customer}& {\color{red}\ding{55}}& {\color{red}\ding{55}} &
{\color{green}\ding{51}}
\\
Attention-Based$^1$ \citep{li2021choice}& {\color{red}\ding{55}}& {\color{red}\ding{55}}&
{\color{green}\ding{51}}
\\
Attention-Based$^2$ \citep{phan2022attentionchoice}& {\color{red}\ding{55}}& {\color{red}\ding{55}}&
{\color{green}\ding{51}}
\\
Ours & {\color{green}\ding{51}}& {\color{green}\ding{51}}&
{\color{green}\ding{51}}
\\
\bottomrule
\end{tabular}
\end{sc}
\end{small}
\end{center}
\vskip -0.1in
\end{table*}

Consider the following problem which brick-and-mortar retailers frequently face: for any product category (e.g.~coffee, shampoo, etc.), the retailer must select an assortment of products to offer. Naturally, the retailer seeks to maximize profit, but has to contend with operational constraints such as limited shelf space, along with the intricacies of human {\em choice} (e.g.~it may be sub-optimal to include a popular product if it is low-profit and cannibalizes demand for other high-profit products).

Fundamental to this operation is a mathematical model of customer choice, or {\em choice model}. At this point, we might expect the story to play out in a familiar manner: the retailer uses past data to {\em estimate} a choice model, and in particular, decades-worth of experience from deep learning in other applications is brought to bear. 

Simply put, this has {\em not} been the case. Specifically, two issues particularly stand out vis-\`a-vis other well-established application domains for deep learning. First, there is overwhelming emphasis placed on estimating choice models which are {\em interpretable}. Now for a choice model to be ``interpretable'' here  does not refer to the notion with the same name that is studied in ongoing research by the deep learning community, but instead requires the choice model to be justifiable via (often heuristic) evidence on ``how'' humans make decisions. We contend that this interpretability is satisfied at the cost of model {\em accuracy}, and in particular is the reason why, for example, neural networks and deep learning largely do {\em not} appear in  choice modeling.

The second issue is that, to date, there does not exist a standard benchmark for choice estimation in the way that most deep learning applications are evaluated. Indeed, arguably even a standard {\em dataset} does not exist -- the datasets previously used to evaluate choice estimation are either unrealistically small or not publicly available. 
The closest to an ideal dataset is the {\em IRI Academic Dataset} \cite{bronnenberg2008database}, though previous experiments have restricted analysis to {\em two weeks} of data (out of multiple years!), and {\em ten} products within each category (out of more than twenty). 

Succinctly, the purpose of this paper is to, in many ways for the first time, treat choice model estimation as a modern deep learning application. To that end, we accomplish the following:
\begin{enumerate}
    \item {\em A New Choice Model based on Self-Attention:} We propose a new choice model which simultaneously (a) generalizes an existing choice model to deal with its intractable sample complexity, and (b) is representable (and trainable) as a simple neural network with a single self-attention head. Drawing this connection yields a model which enjoys both interpretability and the potential for the high model accuracy expected of a modern neural network architectural element. 
    \item {\em Theoretical Guarantees:} We prove that while accurate estimation of the original choice model that we build on requires a volume of data that scales {\em quadratically} in the number of products, our choice model can be estimated with a {\em linear} volume of data. We do this via analysis of a natural regularized nonconvex estimator, the same that would be used in a neural network implementation.
    \item {\em A New Benchmark:} We establish a new benchmark for choice model estimation. Our benchmark\footnote{We will release our code publicly available.} uses the same IRI dataset previously used by others, but substantially expands the range of data used to include an entire year, all 31 product categories, and up to 20 products within each category. This increases, by at least an order of magnitude, the number of transactions and unique assortments that can be analyzed. %For example, in previous experiments which use two weeks and ten products, the product category ``Beer'' consisted of 55 unique assortments and 380,932 transactions. In that same category, our benchmark has over 1,000 unique assortments and over 8 million transactions. 
    Using this data, we evaluate a nearly-comprehensive set of existing choice models, along with our proposed model, and find that ours empirically dominates the rest.
\end{enumerate}
\section{Related Work}

The literature on choice models is vast and spans a broad range of areas, including Machine Learning, Behavioral Economics, Operations Research, and Transportation, just to name a few. This body of work includes traditional models like the multinomial logit (MNL) model \citep{luce1959individual, mcfadden1973conditional}, the latent-class MNL \citep{mcfadden1973conditional}, and the nested logit model \citep{williams1977formation}. These models typically adhere to the \textit{random utility maximization} (RUM) principle. In contrast, recent studies have ventured beyond this traditional \textit{rational} framework. They aim to account for \textit{irrational} aspects of choice behavior, as seen in models like the Halo-MNL \citep{maragheh2018customer} and the generalized stochastic preference model \citep{berbeglia2018generalized}. Moreover, there exists a subset of literature that is superficially "most relevant" to our study, specifically those that address some combination of neural networks and choice models.

\citet{aouad2022representing} model an existing choice model (the mixture MNL) as a neural network; we will discuss this in detail in the next section. \citet{cai2022deep} propose new choice models based on natural neural network architectures: we will find in our own empirical evaluations that this approach is dominated by existing choice models. \citet{phan2022attentionchoice} apply a cross-attention mechanism, which is similar in spirit to our own model (which applies self-attention). \citet{li2021choice} apply a self-attention mechanism similar to ours. In contrast to all of these works, ours is the first to demonstrate (a) theoretical success via a provable estimation guarantee, and (b) empirical success on a realistic experimental benchmark. Finally, \citet{gallego2017attention} propose a choice model which captures ``attention,'' but ``attention'' there refers to the human notion, rather than the neural network architectural element.

% \subsection{Relation to the previous works}
\subsection{Relation to Halo-MNL} Our model is indeed a generalization of the existing Halo-MNL model. But note that before our paper, results for the Halo-MNL were quite limited: theoretically, there existed an estimation algorithm but {\em no} sample complexity guarantee (incidentally, our proposed estimator is {\em not} just a straight-forward generalization of this), and experimentally, the Halo-MNL was {\em not} found to be the most accurate choice model (incidentally, on datasets that are orders of magnitude smaller than ours). Thus, with respect to the Halo-MNL, our theoretical results are entirely novel, and the generalization we make is of substantial empirical value. 

\subsection{Relation to Attention-Based Approaches:} Our model draws a connection to self-attention, and indeed, there is {\em one} previous paper \citep{li2021choice} that attempted to apply self-attention to choice modeling. That paper, in fact, proposes self-attention as a generic mechanism for choice modeling, and so our model is a special case. But that paper contains {\em no} theoretical guarantee and {\em no} empirical comparison to existing choice models.  

 For a quick comparison at a glance, we have compared theoretical guarantee, experimental evidence, and irrationality for the most \textit{related} works in the \cref{table:related_works}. 
\section{Choice Models as Neural Networks}\label{chapter3}
Let $[m] = \{1,\ldots,m\}$ denote a set of ``products.'' A {\em choice function} is a mapping $p: \{0,1\}^m \to \Delta^{m-1}$ such that for any $a \in \{0,1\}^m$, we have that $p(a)_j = 0$ if $a_j = 0$.
That is, the binary vector $a$ encodes the products that make up an {\em assortment}, and $p(a)$ encodes the probability that each product is chosen when assortment $a$ is offered (exactly one product is chosen, and hence they sum to one). 

The problem of {\em choice estimation} consists of estimating a ``ground truth'' choice function $p^*$ from data in the form of {\em transactions}, or tuples $(a_i,y_i)$ where $a_1,\ldots,a_n \in \{0,1\}^m$ are assortments, and $y_1,\ldots,y_n \in \Delta^{m-1}$ are choices made independently according to $p^*$:
\[  y_i = e_j \;\; \text{with probability} \;\; p^*(a_i)_j.\]
Since the space of all choice functions is massive (e.g.~its dimension is at least $\Omega(2^m)$ when viewed as a vector space), much work has been devoted to positing smaller sub-families, called {\em choice models}, of this space that strikes a balance between simplicity of estimation and representability of realistic choice-making. Before describing our proposed choice model in the next section, we use this section to describe three existing choice models that are widely used and admit natural neural network representations.

% \begin{figure*}[t]
% \centering
% \includegraphics[width=1.0\textwidth]{} % Reduce the figure size so that it is slightly narrower than the column.
% \caption{Choice models as neural networks. The vector $a$ is an assortment as an input and $P(a)$ is the probability that each product is chosen when assortment $a$ is offered. For $m$ products in an assortment, MNL needs $O(m)$ parameters, Mixed MNL needs $O(mN)$ parameters with $N$ mixtures, and Halo MNL needs $O(m^2)$ parameters.}
% \label{cm2nn}
% \end{figure*}

1. {\bf MNL:} Perhaps the simplest, most-popular choice model is the {\em multinomial logit} ({\em MNL}), which is entirely parameterized by a single vector $\alpha \in \mathbb{R}^m$:
\[
p(a)_i = \frac{\exp(\alpha_i) }{ \sum_{j \in [m]} a_j\exp(\alpha_j)}, \;\; \forall i: a_i = 1
\]
i.e.~the choice probabilities of an assortment $a$ are computed as the $\mathrm{softmax}(\cdot)$ of the corresponding elements of $\alpha$. In the parlance of neural networks, MNL can be represented as a network with no hidden layers. %(see Figure \ref{cm2nn}a).

2. {\bf Mixture MNL:} One useful generalization of the MNL is to {\em mixtures} whose components are taken from MNL. A classic result \cite{mcfadden2000mixed} in choice modeling states that this family is rich enough to encompass all ``rational'' choice functions.\footnote{``Rational'' loosely means that choices are made according to the maximization of some (random) utility.} As previously observed \cite{aouad2022representing}, a finite mixture of MNL components can again be represented as a neural network, this time with a single hidden layer. %(see Figure \ref{cm2nn}b). 

3. {\bf Halo MNL:}
Another generalization of MNL is the {\em Halo MNL} \citep{maragheh2018customer}, which is designed to model simple irrational behaviors called  ``halo'' effects. It is parameterized by a matrix $H \in \mathbb{R}^{m \times m}$:
\begin{equation} \label{eqn:HaloMNL}
p(a)_i = \frac{\exp\left(\sum_{k \in [m]}a_k H_{ik}\right) }{ \sum_{j \in [m]}a_j \exp\left(\sum_{k \in [m]}a_kH_{jk}\right)}, \;\; \forall i: a_i = 1.\end{equation}
If $H$ is a diagonal matrix, then this is a standard MNL with parameters taken from the diagonal entries of $H$. A non-zero off-diagonal entry $H_{ij}$ is meant to represent the halo effect that the presence of product $j$ has on the choice probability of product $i$. These pairwise effects can be both positive ($H_{ij} > 0$) and negative ($H_{ij} < 0$). The Halo MNL can be represented as a fully-connected neural network with no hidden layers. %{(see Figure \ref{cm2nn}c)}. 
As an aside, the Halo MNL is {\em not} a generalization of the Mixture MNL, though a mixture Halo MNL would be (we will not consider this here).
\section{Our Model Through Two Lenses}
In light of the previous discussion of existing choice models,  there are two ways to motivate the choice model we propose. While these motivations are completely orthogonal to each other, it will turn out that they are in fact nicely connected.
\begin{enumerate}
    \item 
The first motivation is informal/empirical, and follows from the reaction that someone well-versed in deep learning likely has when finding out that the neural networks described in the \cref{chapter3} represent the state-of-the-art. That reaction is that the networks are fairly primitive, particularly with respect to other application areas for which more-complex network architectures are successful.

In designing a new choice model, a natural proposal then would be to explore more-sophisticated network architectures, even if the resulting choice model no longer lends itself to nice interpretations. The most immediate option is simply to explore depth (i.e.~more layers) -- this is something we try in our benchmark, and ultimately find does {\em not} work well empirically. Another option is {\em self-attention}, which works incredibly well in other applications, and is naturally well-suited to choice modeling. In particular, the motivation of self-attention is to capture effects that are similar, at least in spirit, to halo effects. Moreover, since we are working with subsets here rather than sequences, there is less need for the ad-hoc engineering (e.g.~positional encodings) required for other applications. All of this is to say that we might a priori expect a simple choice model built on self-attention to demonstrate empirical success. This is precisely the model we propose, and we do indeed see such empirical success.
\item 
The second motivation is more principled and is completely orthogonal to the first. Consider again the Halo MNL choice model. This is the most-recent, and thus least-established, of the three choice models we defined, but there is ample experimental evidence from the behavioral sciences literature \citep{thorndike1920constant, simonson1992choice, feng2018substitutability} that halo effects do exist. The primary problem with this model is that the amount of data needed to accurately estimate its parameters is too large. In particular, it suffers from $\Omega(m^2)$ sample complexity:
\begin{theoremEnd}[category=halomnlsamplecomplexity]{proposition}[Halo MNL Sample Complexity]
\label{prop:LB-HaloMNL}
    For any Halo MNL estimator $\hat{H}$, and any set of $n$ assortments, there exists a ground truth $H^*$ such that
    \[ \frac{1}{m^2}\mathbb{E}\|  \hat{H}  - H^* \|_1 = \Omega \left( \frac{m^2}{n} \right).  \]
\end{theoremEnd}
\begin{proofsketch}
    Here, we use $\| \cdot\|_1$ to denote the sum of the absolute values of the entries. We apply a Fano/Assouad-type analysis on the family of Halo MNL choice functions gotten by setting the off-diagonals of $H^*$ to be $\pm\delta$ for some properly-chosen $\delta$, and the diagonals to be $0$.
\end{proofsketch}
\begin{proofEnd}
Fix any assortment $a \in \{0,1\}^m$, and let $H,H' \in \mathbb{R}^{m \times m}$ be two Halo MNL parameter matrices which differ on a single entry by an additive factor $\delta$, i.e. $H' = H + \delta e_ie_j^\top$ for some $i,j \in [m]$. The KL-divergence of the random choices made under these two models under assortment $a$, which we denote by $\mathcal{K}_a(H||H')$, can be upper bounded as follows:
\begin{align*}
\mathcal{K}_a(H||H') 
&= \sum_{i \in a} \frac{\exp(Ha)_i}{\sum_{j \in a} \exp(Ha)_j }\log\left(\frac{\exp(Ha)_i}{\exp(H'a)_i} \frac{\sum_{j \in a}\exp(H'a)_j}{\sum_{j \in a}\exp(Ha)_j} \right) \\
&=   \frac{\sum_{i \in a} \exp(Ha)_i  (Ha-H'a)_i}{\sum_{j \in a} \exp(Ha)_j}
+ \log\left( \frac{\sum_{j \in a}\exp(H'a)_j}{\sum_{j \in a}\exp(Ha)_j} \right) \\
&\le \max_{i \in a}|(Ha-H'a)_i| 
+ \log \left(1 + \delta \frac{\exp(H'a)i}{\sum_{j \in a}\exp(Ha)_j}\right) \\
&\le \max_{i \in a}|(Ha-H'a)_i| 
+ \log \left(1 + \delta \right) \\
&\le 2\delta.
\end{align*}
Since the final expression does not depend on the assortment $a$, and the KL-divergence tensorizes over product measures, we have that the KL-divergence of the choices made across all $n$ assortments can be bounded as 
\[  \mathcal{K}(H || H') \le 2n\delta. \]

Now consider two parameter matrices $H,H'$ whose diagonal entries are zero, and whose off-diagonal entries belong to $\{\delta,-\delta\}$. Then iteratively applying the above yields
\[ \mathcal{K}(H || H') \le 2n\delta d_\mathrm{ham}(H,H'),  \]
where $d_\mathrm{ham}(H,H')$ is the number of entries on which $H$ and $H'$ differ.

Let $k = \frac{1}{2}(m^2-m) - m^{1+\epsilon}$, for any $\epsilon > 0$.
The packing number of the Hamming cube $\{\delta,-\delta\}^{m^2-m}$ is sufficiently large so that there exist at least $N = O(2^{m^2-m}/m^{2k})$ such parameter matrices whose pairwise hamming distance is at least $k$. By Assouad's lemma, we conclude that for any estimator $\hat{H}$, there exists at least one of these parameter matrices such that 
\[  \mathbb{E}\|\hat{H}-H^*\|_1 \ge \frac{\delta k}{2}\left( 1 - \frac{2n\delta k + \log 2}{\log N}\right) = \Omega\left(\delta k \left(1 - \frac{2n\delta k + \log 2}{m^2 - 2k \log m} \right) \right).\]
Taking $\delta$ to be constant, and $k \sim m^2/n$ completes the proof.
\end{proofEnd}

\cref{prop:LB-HaloMNL} shows that achieving a non-trivial mean absolute error when estimating the parameter matrix $H$ requires the number of transactions to at least $\Omega(m^2)$.
At a high level, we still would like to capture halo effects, but need to do so in a manner which allows for accurate model estimation using, ideally, $O(m)$ transactions. 

Thus motivated, our proposed model is essentially a low-rank version of the Halo MNL. Specifically, we define the {\bf Low-Rank Halo MNL} to be the choice model in Eq.~\ref{eqn:HaloMNL}, with the additional assumption that
the matrix $H$ additively decomposes into a diagonal matrix and a low-rank matrix:
\begin{equation} \label{eqn:LowRankHalo} H = \mathrm{diag}(\alpha) + UV^\top, \end{equation}
where $\alpha \in \mathbb{R}^m$, the $\mathrm{diag}(\dot)$ operator maps a vector to its corresponding diagonal matrix, and $U,V \in \mathbb{R}^{m \times r}$ for some {\em rank} $r \in [m]$. 

The Low-Rank Halo-MNL can be thought of as interpolating between the classic MNL and the Halo MNL. Taking $r$ to be $m$ would return the ``full'' Halo MNL model, and loosely speaking, $r=0$ would be the standard MNL (this is the reason we include $\alpha$, rather than simply modeling $H$ as low-rank, as even the simple MNL  already yields a full-rank matrix;  here only the halo effects are taken to be low-rank). In general, smaller values of $r$ yield more structure and ideally lower sample complexity. In fact, we might expect that $O(rm)$ transactions suffices, as in the following lower bound:

\begin{theoremEnd}[category=lowrankhalomnlsamplecomplexity]{proposition}[Low-Rank Halo MNL Sample Complexity]
\label{prop:LB-LowRankHalo}
    For any Low-Rank Halo MNL estimator $\hat{H}$, and any set of $n$ assortments, there exists a ground truth $H^*$ such that
    \[ \frac{1}{m^2}\mathbb{E}\|  \hat{H}  - H^* \|_1 = \Omega \left( \frac{rm}{n} \right).  \]
\end{theoremEnd}
\begin{proofsketch}
    Similar to the proof of Proposition \ref{prop:LB-HaloMNL}, we apply a Fano/Assouad-type argument, this time on the family of Halo MNL matrices $H^*$ with $r$ columns of $\pm \delta$ entries.
\end{proofsketch}
\begin{proofEnd}
Consider the set of parameter matrices which are non-zero only on their first $r$ columns, whose diagonal entries are zero, and the non-zero entries belong to $\{\delta,-\delta\}$. The entire proof of Proposition \ref{prop:LB-HaloMNL} can be repeated, though the corresponding Hamming cube is on $O(rm)$ dimensions. The result follows directly.
\end{proofEnd}

% \begin{restatable}[Low-Rank Halo MNL Sample Complexity]{proposition}{LBLowRankHalo} \label{prop:LB-LowRankHalo}
%     For any Low-Rank Halo MNL estimator $\hat{H}$, and any set of $n$ assortments, there exists a ground truth $H^*$ such that
%     \[ \frac{1}{m^2}\mathbb{E}\|  \hat{H}  - H^* \|_1 = \Omega \left( \frac{rm}{n} \right).  \]
% \end{restatable}
% {\em Proof sketch:} Similar to the proof of Proposition \ref{prop:LB-HaloMNL}, we apply a Fano/Assouad-type argument, this time on the family of Halo MNL matrices $H^*$ with $r$ columns of $\pm \delta$ entries.\qed
\end{enumerate}

Returning to our first motivation, the two are intimately linked. Precisely, the Low-Rank Halo MNL is in fact {\em equivalent} to a network with the self-attention mechanism.

\begin{proposition}
\label{prop:LowRank-Attention}
    The Low-Rank Halo MNL choice model can be represented as a neural network with a single self-attention head.
\end{proposition}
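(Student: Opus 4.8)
The plan is to give a constructive proof: for any Low-Rank Halo MNL parameters $(\alpha, U, V)$ I will exhibit an explicit single-head self-attention network computing the identical choice function, and argue the correspondence is exact rather than approximate. The starting point is to expand the logits. Substituting $H = \mathrm{diag}(\alpha) + UV^\top$ into the Halo MNL definition \eqref{eqn:HaloMNL}, the pre-softmax logit of a product $i$ with $a_i = 1$ becomes $\sum_k a_k H_{ik} = \alpha_i + u_i^\top\big(\sum_{k: a_k=1} v_k\big) = (\mathrm{diag}(\alpha)\,a)_i + (UV^\top a)_i$, where $u_i^\top, v_k^\top$ denote the rows of $U, V$. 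Thus the full logit vector is simply $Ha$, and $p(a)_i = \mathrm{softmax}_{i:\, a_i=1}\big((Ha)_i\big)$. The key structural observation is that the halo contribution $UV^\top a$ is governed by the rank-$r$ matrix $UV^\top$, which is exactly the form of the (pre-softmax) score matrix $QK^\top$ produced by a single attention head whose query/key dimension equals $r$.

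Next I would build the network. I represent the assortment as a collection of tokens, one per product, with product $k$ carried by the one-hot embedding $e_k$ so that absent products are masked out. Choosing the query and key projections so that the $i$-th query is $u_i$ and the $k$-th key is $v_k$, the head's raw score matrix is precisely $u_i^\top v_k = (UV^\top)_{ik}$, i.e. $QK^\top = UV^\top$ with head dimension $r$. Aggregating these scores across the present products (a mask-restricted sum over $\{k: a_k=1\}$) yields the halo logits $UV^\top a$, while the diagonal MNL term $\mathrm{diag}(\alpha)\,a$ is supplied by an additive bias or residual path contributing $\alpha_i$ to token $i$. Feeding the resulting logit vector $Ha$ through a softmax restricted to the present products reproduces $p(a)$ exactly, and the mask guarantees $p(a)_j = 0$ whenever $a_j = 0$, as required of a choice function. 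Since the parameters $(\alpha, U, V)$ translate directly into the head's weights for every rank $r \in [m]$, the construction represents the entire Low-Rank Halo MNL family exactly.

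The step I expect to be the main obstacle is reconciling the two softmax normalizations. A textbook self-attention head applies its own softmax over the keys, so its output is a convex combination $\sum_k \mathrm{softmax}_k(\cdot)\, v_k$ rather than the unnormalized masked sum $\sum_{k: a_k=1} v_k = V^\top a$ that the halo term demands; meanwhile the choice model applies exactly one softmax, at the very end. I would resolve this by being careful about where each operation lives: the bilinear interaction $UV^\top$ is placed in the head's pre-softmax score matrix, the assortment aggregation is realized as the mask-weighted sum rather than a normalized average, and the model's single softmax is identified with the head's output softmax over the assortment. Concretely, one checks that letting the query read the pooled summary $V^\top a$ against per-product keys $u_i$ makes the attention logits equal $(Ha)_i$, so the head's softmax is literally the choice softmax. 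The remaining bookkeeping -- absorbing the $1/\sqrt{r}$ scaling into $U$ or $V$ and verifying the masking via standard attention padding -- is routine, so the whole argument reduces to identifying the rank-$r$ score matrix with $UV^\top$ together with this softmax reconciliation.
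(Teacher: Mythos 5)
Your proposal is correct, and its core is the same constructive identification the paper makes: tokenize the products, set the query/key projections to $U$ and $V$ (absorbing the $1/\sqrt{r}$ scaling), take the values to be the identity so the head outputs the halo logits $UV^\top a$, route the diagonal term $\alpha$ through a separate additive path, and apply a single masked softmax at the end --- the paper's proof does precisely this with $Q = U$, $K = \sqrt{r}V$, and values $Y = I$. The one genuine divergence is the obstacle you flag yourself: the head's internal softmax. The paper resolves it by simply omitting that softmax (acknowledged only in a footnote), and it handles both masking and the diagonal term with a nonstandard device: edges from the input layer with logarithmic activation and weights $\exp(\alpha)$, under the encoding $a \in \{0,e\}^m$, so absent products receive logit $-\infty$. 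You instead keep the machinery standard (additive bias for $\alpha_i$, padding mask for absent products) and, more interestingly, sketch a variant in which the head's own softmax \emph{is} the choice softmax: pool $V^\top a$ into a single query, score it against per-product keys $u_i$ to obtain logits $(UV^\top a)_i + \alpha_i = (Ha)_i$, and read the choice probabilities directly off the attention weights. That variant buys fidelity to textbook attention (no dropped softmax), at the cost that the query is an assortment-level pooled vector rather than per-token, so the head is closer to attention pooling than to literal self-attention; the paper's construction keeps per-token queries but needs the softmax-free head. Either route establishes the proposition.
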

\begin{proof}
Recall that the Low-Rank Halo MNL is encoded by $H = \mathrm{diag}(\alpha) + UV^\top$ for $\alpha \in \mathbb{R}^m$ and $U,V \in \mathbb{R}^{m \times r}$. It takes as input an assortment $a \in \{0,1\}^m$, and outputs choice probabilities $p(a) \in \Delta^{m-1}$ according to:
\[ p(a)_i = \frac{a_i \exp(Ha)_i}{a^\top \exp(Ha)},  \]
or equivalently, if the assortments are encoded as $a \in \{0,e\}^m$, then 
\[ p(a) = \mathrm{softmax}(e\log(a) \circ (Ha)).\]

Now consider a neural network with three layers:
\begin{enumerate}
\item {\bf Input:} The input layer consists of $m$ nodes. Note that this is essentially just a tokenization of the products, and any assortment $a \in \{0,1\}^m$ can be input.
\item {\bf Self-attention:} A single self-attention head will compute so-called queries $Q \in \mathbb{R}^{m \times d}$ and keys $K \in \mathbb{R}^{m \times d}$ for some $d \in [m]$. In this case, take $d = r$, and let $Q = U$, and $K = \sqrt{r}V$. The self-attention layer will also compute values $Y \in \mathbb{R}^{m \times d}$ for some $d \in [m]$. In this case, take $d = m$, and let $Y = I$, the identity matrix. The result of this layer, after feeding through an assortment $a$ from the input layer, is $m$ outputs of the form $\frac{QK^\top Va}{\sqrt{r}}$, which is equal to $UV^\top a$.\footnote{Note that there is often a softmax applied to $QK^\top$. We omit this here.}
\item {\bf Output:} The output layer consists of $m$ nodes, and is gotten by taking the softmax of the output of the self-attention head, along with single edges from the input layer with logarithmic activation. Setting the weights on the single edges to be $\exp(\alpha)$, the output is a softmax applied to $e\log(a) \circ (Ha)$ as was needed.
\end{enumerate}
\end{proof}

% \input{theorems/thm_low_rank_is_self_attention}
% \begin{restatable}{proposition}{LowRankAtt}\label{prop:LowRank-Attention}
%     The Low-Rank Halo MNL choice model can be represented as a neural network with a single self-attention head.
% \end{restatable}
% {\em Proof sketch:} Consider the decomposition in Eq.~\ref{eqn:LowRankHalo}, and compare it to the standard self-attention mechanism outlined in Algorithm 1. In the nomenclature of attention, the matrices $U$ and $V$ encode the $r$-dimensional  {\em queries} and {\em keys} of the $m$  products (when taken to be one-hot encoded). Finally, the {\em values} of the products are simply formed by the canonical basis in $\mathbb{R}^m$. \qed
\section{Estimation and Theoretical Guarantees}
Recall that choice data is in the form of transactions $(a_1,y_1),\ldots,(a_n,y_n)$, where each $a_i \in \{0,1\}^m$ is an assortment, and each $y_i$ is a unit-vector containing a single $1$ representing the choice made. Given a set of choice data, the {\em likelihood function} for the Halo MNL can be written compactly as
\[ \mathcal{L}(H) = \prod_{i=1}^n \frac{y^\top \exp(Ha)}{a^\top \exp(Ha)},\]
where we are using $\exp(\cdot)$ to denote the entry-wise exponential.

We propose to estimate the Low-Rank Halo MNL via a simple nonconvex estimator: 

\begin{equation} \label{eqn:nonconvex}
\argmax_{\alpha,U,V} \log \mathcal{L}(\mathrm{diag}(\alpha) + UV^\top) + \lambda(\|\alpha\|^2 + \|U\|_F^2 + \|V\|_F^2), \end{equation}
where $\lambda \ge 0$ is a tuning parameter, and $\|\cdot\|_F$ denotes the matrix Frobenius norm. For $\lambda = 0$, this is the estimator that results from training a standard neural network implementation of the Low-Rank Halo MNL with the cross-entropy loss. Taking $\lambda > 0$ introduces regularization, which (a) solves an identifiability issue (otherwise the parameters are only unique up to a multiplicative factor), and (b) yields stronger convergence guarantees.

In the spirit of demonstrating that the Low-Rank Halo MNL has $O(rm)$ sample complexity, rather than the $O(m^2)$ required for the Halo MNL, we show that under a reasonable generative model of the assortments, the objective function in Eq.~\ref{eqn:nonconvex} is likely to contain a stationary point ``close'' to the true parameters:

\begin{theoremEnd}[category=lowrankestimation]{theorem}\label{thm:UB}
    Let $\mathcal{A}$ denote a random distribution taking values on $\{0,1\}^m$ such that for some $p \in (0,1/2)$,
    \begin{enumerate}
        \item $\mathbb{P}(\mathcal{A}_j = 1) \in [p,1-p]$ for all $j \in [m]$
        \item $\mathbb{P}(\mathcal{A}_j = 1 | \mathcal{A}_{j'} = 1 ) \in [p,1-p]$ for all $j,j' \in [m]$ such that $j \ne j'$.
    \end{enumerate}
    Let $n = \Omega(rm\log m)$, and
suppose assortments $a_1,\ldots,a_n$ are drawn independently from $\mathcal{A}$. Then with probability at least $1 - o(1)$, the objective function in Eq.~\ref{eqn:nonconvex} has a local maximum $(\hat{\alpha},\hat{U},\hat{V})$ satisfying
\[  \frac{1}{m^2}\|\hat{H} - H^*\|_F^2 = o(1), \]
where $\hat{H} = \mathrm{diag}(\hat{\alpha}) + \hat{U} \hat{V}^\top$.
\end{theoremEnd}
% \begin{proofsketch}
% proofsketch
% \end{proofsketch}
\begin{proofEnd}
First note that given our conditions on $\mathcal{A}$, and the fact that we will only need to show our result up to constants (which may depend on $\mathcal{A}$), it suffices to assume that $\mathcal{A}$ is in fact the distribution which includes each product independently with probability $1/2$ (since our bounds would depend continuously on $p$ for $p > 0$). Along the same lines, it also suffices to assume that $H^*$ is such that the resulting choice probabilities are always uniform; we could perform a more fine-grained analysis by defining \[p^*_{\max} = \max_{a \in \{0,1\}^m} \max_{i \in [m]} \frac{a_i \exp(Ha)_i}{a^\top \exp(Ha)}, \] and $p^*_{\min}$ equivalently, in which case our result would depend continuously on their ratio.

Now fix a single transaction $(a,y)$, and let $\ell(H)$ denote the corresponding log-likelihood function:
\[  \ell(H) = \log \frac{y^\top \exp(Ha)}{a^\top \exp(Ha)} = y^\top H a - \log (a^\top \exp(Ha)). \]
For convenience of notation, define the vector $p$ to be $p = \exp(Ha)/a^\top \exp(Ha)$.

Then the gradient of $\ell(\cdot)$ is 
\[ \nabla_H \ell(H) =  ya^\top - \frac{\exp(Ha)}{a^\top \exp(Ha)}a^\top = (y-p)a^\top,  \]
and the second derivatives are 
\[ \nabla_{H_{Ij}}\nabla_{H_{I\ell}}\ell(H) = -a_\ell a_j \exp(Ha)_I \frac{a^\top \exp(Ha)   - \exp(Ha)_I   }{(a^\top \exp(Ha))^2} = -a_j a_\ell  p_I + a_j a_\ell p_I^2, \]

and for $I \ne I'$:
\[ \nabla_{H_{Ij}}\nabla_{H_{I'\ell}}\ell(H) = a_j a_\ell\frac{\exp(Ha)_I  \exp(Ha)_{I'}  }{(a^\top \exp(Ha))^2} = a_j a_\ell p_I p_{I'}. \]

Now consider a parameterization of $H$ by some scalar $x$. Let $\nabla_x H$ denote the $m \times m$ matrix whose entries are $\nabla_x H(x)_{ij}$. Then applying the chain rule gives the following: 
\[ \nabla_x \ell(H) = \langle \nabla_H \ell(H), \nabla_x H \rangle = (y-p)^\top (\nabla_x H)a. \]
Letting $\nabla_x^2 H$ denote the matrix whose entries are $\nabla_x^2 H(x)_{ij}$, we can compute the second derivative as well:
\begin{align*}
\nabla_x^2 \ell(H) 
&= \langle \nabla_x^2 H, \nabla_H \ell(H) \rangle + \sum_{i,j=1}^m \sum_{i',j'=1}^m \nabla_x H(x)_{ij} \nabla_x H(x)_{i'j'} \nabla^2_{H_{ij}H_{i'j'}}\ell(H) \\
&= (y-p)^\top (\nabla^2_x H)a + \left(p^\top (\nabla_x H)a \right)^2 - a^\top (\nabla_x H)^\top \mathrm{diag}(p) (\nabla_x H) a.
\end{align*}
Consider these terms separately in order:
\begin{enumerate}
    \item The first term is mean-zero (because $\mathbb{E}[y-p|a] = 0$), %and with probability $1- o_m(1)$ is at most \[ (y-p)^\top (\nabla_x^2 H)a \le \omega_m(1)(\left\|(\nabla_x^2 H)e \right\| + \|\nabla_x^2 H\|_{2,2} ),\]    where $\|\cdot\|_{2,2}$ denotes the vector-norm of the vector-norms of the rows.
    and bounded in magnitude by $\|\nabla_x^2 H a\|_\infty.$ By Hoeffding's inequality, when summing over $n$ transactions, we have that with probability $1 - o_n(1)$,
    \[\sum_{i=1}^n (y_i-p_i)^\top (\nabla^2_x H)a_i \le \omega_n(1)\sqrt{n} \max_a\|(\nabla^2_x H) a\|_\infty\]
    
    \item The second term has mean \[ \mathbb{E}\left(p^\top (\nabla_x H)a \right)^2 \approx \frac{(e^\top (\nabla_x H)e)^2}{m^2}, \]
    and by Hoeffding's inequality, with probability $1-o_m(1)$, it holds that \[ \left(p^\top (\nabla_x H)a \right)^2 \le \omega_m(1) \frac{(e^\top (\nabla_x H)e)^2}{m^2}. \]
    Summing over $n$ transactions, we have that with probability $1 - no_m(1)$,
    \[\sum_{i=1}^n  \left(p_i^\top (\nabla_x H)a_i \right)^2 \le n\omega_m(1) \frac{(e^\top (\nabla_x H)e)^2}{m^2}.  \]
    
    \item The third term has mean \[ \mathbb{E}\left[a^\top (\nabla_x H)^\top \mathrm{diag}(p) (\nabla_x H) a\right] \approx  \frac{e^\top (\nabla_x H)^\top (\nabla_x H) e}{m}, \]
    and with probability $1 - o_m(1)$,
    \[ a^\top (\nabla_x H)^\top \mathrm{diag}(p) (\nabla_x H) a \ge o_m(1)\left(\frac{e^\top (\nabla_x H)^\top (\nabla_x H) e}{m} + \frac{\| \nabla_x H \|_F^2}{m} \right).  \]
    Thus, with probability $1 - no_m(1)$,
    \[ \sum_{i=1} ^n a_i^\top (\nabla_x H)^\top \mathrm{diag}(p_i) (\nabla_x H) a_i \ge no_m(1)\left(\frac{e^\top (\nabla_x H)^\top (\nabla_x H) e}{m} + \frac{\| \nabla_x H \|_F^2}{m} \right). \]
\end{enumerate}

Now consider $H$ parameterized by scalar $x$ in the following way:
\[ H(x) = \mathrm{diag(\alpha^* + x\delta_\alpha)} + (U^* + x\delta_U)(V^* + x\delta_V)^\top,\]
where $\delta$ (which is the vector-concatenation of $\delta_\alpha, \delta_U,$ and $\delta_V$) is a norm-1 vector. We have that
\[  \nabla_x H = \mathrm{diag}(\delta_\alpha) + \delta_UV^{*\top} + U^* \delta_V^\top + 2x \delta_U \delta_V^\top, \]
and
\[ \nabla_x^2 H = 2\delta_U \delta_V^\top. \]
Observe that, for $x=0$, the function $\delta \to \nabla_x H$ is a linear mapping. Let $T^\perp$ denote the nullspace of this linear map, and $T$ denote the subspace perpindicular to $T^\perp$. 

There exists some constant $\gamma > 0$ such that for any norm-1 $\delta \in T$, the matrix $\nabla_x H$ contains at least one entry  larger in magnitude than $\gamma$. Combining this with the analysis of the three terms above, we conclude that with probability $1 - no_m(1) - o_n(1)$, at $x=0$, \[ \nabla_x^2 \ell(H) \le C\frac{no_m(1) \gamma^2}{m}, \]
for some constant $C > 0$.

To summarize what we have shown so far, we have that our estimator's objective function, namely
\[ f(\alpha,U,V) = \log \mathcal{L}(\mathrm{diag}(\alpha) + UV^\top) - \lambda (\|\alpha\|^2 + \|U\|_F^2 + \|V\|_F^2),  \]
is, with probability $1 - no_m(1) - o_n(1)$, strongly concave at $(\alpha^*,U^*,V^*)$ with parameter \[M = \min\left\{ C\frac{no_m(1) \gamma^2}{m},\lambda\right\}.\]

It remains to bound the gradient of $f$ at $H^*$:
\[ \| \nabla_{(\alpha,U,V)} f(\alpha^*,U^*,V^*) \| \le \left\| \sum_{i=1}^n \nabla_{(\alpha,U,V)} \ell(H^*) \right\| + 2\lambda(\|\alpha^*\| + \|U^*\|_F + \|V^*\|_F).  \]
The second term is of size $\lambda O(\sqrt{rm})$. The first term can be bounded as
\begin{align*} \left\| \sum_{i=1}^n \nabla_{(\alpha,U,V)} \ell(H^*) \right\| 
\le \left\| \sum_{i=1}^n (y_i - p_i)^\top a_i \right\| + \left\| \sum_{i=1}^n (y_i - p_i) a_i^\top V^* \right\| +  \left\| \sum_{i=1}^n U^{*^\top} (y_i - p_i) a_i^\top  \right\|,
\end{align*} which with probability $1 - o_n(1)$ is bounded by $\omega_n(1) \sqrt{nrm}. $

To conclude, there exists a stationary point $(\hat{\alpha},\hat{U},\hat{V})$ such that 
\[ \frac{\|(\hat{\alpha},\hat{U},\hat{V}) - (\alpha^*,U^*,V^*)\|^2}{rm} \le \frac{\omega_n(1)rm}{n}, \]
and the result follows.
\end{proofEnd}

% \begin{restatable}{theorem}{thmUB}\label{thm:UB}
%     Let $\mathcal{A}$ denote a random distribution taking values on $\{0,1\}^m$ such that for some $p \in (0,1/2)$,
%     \begin{enumerate}
%         \item $\mathbb{P}(\mathcal{A}_j = 1) \in [p,1-p]$ for all $j \in [m]$
%         \item $\mathbb{P}(\mathcal{A}_j = 1 | \mathcal{A}_{j'} = 1 ) \in [p,1-p]$ for all $j,j' \in [m]$ such that $j \ne j'$.
%     \end{enumerate}
%     Let $n = \Omega(rm\log m)$, and
% suppose assortments $a_1,\ldots,a_n$ are drawn independently from $\mathcal{A}$. Then with probability at least $1 - o(1)$, the objective function in Eq.~\ref{eqn:nonconvex} has a local maximum $(\hat{\alpha},\hat{U},\hat{V})$ satisfying
% \[  \frac{1}{m^2}\|\hat{H} - H^*\|_F^2 = o(1), \]
% where $\hat{H} = \mathrm{diag}(\hat{\alpha}) + \hat{U} \hat{V}^\top$.
% \end{restatable}
Theorem \ref{thm:UB} does not preclude the existence of other stationary points, so part of the value of our coming experiments is to demonstrate that the Low-Rank Halo MNL can be practically estimated. 
\section{Experiment}
\begin{algorithm}[tb]
\caption{Low-Rank Halo MNL. $\ell$ is MNL layer for diagonal values in attention matrix $A$, denoted as $M$, and $\zeta$ is a function to generate query, key, and value: $Q, K \in \mathbb{R}^{m \times d_k}, V\in\mathrm{R}^{m \times d_v}$, where $d_k$ and $d_v$ is the dimension of key and value.}
\label{alg:alg_attention}
\begin{algorithmic}[1]
\State $M \gets \ell(S)$    \Comment{Calculate MNL values}
\State $Q, K, V \gets \zeta(S)$ \Comment{Get query, key, and value}
\State $A \gets \frac{Q \cdot K^T}{\sqrt{d_k}}$ \Comment{Calculate attention}
\If{Normalization}
    \State $A \gets \text{softmax}(A)$
\EndIf 
\State $A \gets A \cdot V$  
\State $A_{\text{diag}} \gets M$    \Comment{Replace diagonal values with $M$}
\State $p(S) \gets \sum_{j} A_{i,j}$ 
\end{algorithmic}
\end{algorithm}
% \begin{algorithm}[tb]
% \caption{Low-Rank Halo MNL. $\ell$ is MNL layer for diagonal values in attention matrix $A$, denoted as $M$, and $\zeta$ is a function to generate query, key, and value: $Q, K \in \mathbb{R}^{m \times d_k}, V\in\mathrm{R}^{m \times d_v}$, where $d_k$ and $d_v$ is the dimension of key and value.}
% \label{alg:alg_attention}
% \begin{algorithmic}
% \STATE $M \gets \ell(S)$    \COMMENT{Calculate MNL values}
% \STATE $Q, K, V \gets \zeta(S)$ \COMMENT{Get query, key, and value}
% \STATE $A \gets \frac{Q \cdot K^T}{\sqrt{d_k}}$ \COMMENT{Calculate attention}
% \IF{Normalization}
%     \STATE $A \gets \text{softmax}(A)$
% \ENDIF
% \STATE $A \gets A \cdot V$  
% \STATE $A_{\text{diag}} \gets M$    \COMMENT{Replace diagonal values with $M$}
% \STATE $p(S) \gets \sum_{j} A_{i,j}$ 
% \end{algorithmic}
% \end{algorithm}

\begin{table*}[t]
\caption{Summary of experimental results on Hotel Dataset}
\label{tab:hotel_table}
\vskip 0.15in
\begin{center}
\begin{small}
\begin{sc}
\begin{tabular}{cccccc}
\toprule
\textbf{Model} &
Hotel \#1 &
Hotel \#2 & 
Hotel \#3 &
Hotel \#4 &
Hotel \#5 \\ 
\midrule
\textbf{Our Model} &
 \textbf{0.7872}&  \textbf{0.7535}& 0.7154  &  \textbf{0.6922}&0.7894
\\ 
MNL & 0.8320
 &0.7966& 0.7423 & 0.7459 & 0.8118\\
Mixed MNL & 0.8337
 &0.7924  & 0.7411 & 0.7019 &0.8112\\
Halo MNL &0.7874
 & 0.7558 & 0.7136 & 0.6995 &0.7971\\
Exponomial &0.8070
 &  0.7593&0.7173  & 0.7199 &0.7904\\
Markov Chain & 0.7884
 & 0.7536 & \textbf{0.7123} & 0.7306 &\textbf{0.7887}\\
Stochastic Preference & 0.7967
 & 0.7574 & 0.7167 & 0.7368 & 0.7926\\
 \bottomrule
\end{tabular}
\end{sc}
\end{small}
\end{center}
\vskip -0.1in
\end{table*}

In this section, we proceed to empirically validate the Low-Rank Halo MNL model, aiming to demonstrate its practical estimability and to measure its predictive performance against a suite of established choice modeling methods. Through this rigorous comparative analysis, we intend to solidify the model's standing both theoretically and in real-world applications, highlighting its capacity to capture consumer choice dynamics effectively. 

Typically, three principal datasets serve as benchmarks in this field: the sushi dataset \citep{kamishima2003nantonac}, the hotel dataset \citep{bodea2009data}, and the IRI Academic dataset \citep{bronnenberg2008database}. However, the sushi dataset, primarily comprising preference lists from individual customers, primarily reflects \textit{rational} choice aspects. This limitation renders it less suitable for our experiments. Consequently, our analysis will focus on utilizing the hotel and the IRI Academic datasets to assess and quantify the predictive capabilities of the models under study.

\subsection{Dataset Description}

\paragraph{Hotel Dataset}
The hotel dataset encompasses a collection of real-world booking transactions primarily from business clients at five hotels within the continental U.S. These bookings, recorded between March 12, 2007, and April 15, 2007, feature a minimum booking horizon of four weeks for each check-in date. The dataset captures detailed information at each booking instance, including the rate and availability of different room types. This data was meticulously gathered from various sources, including direct reservations through the hotel or customer relationship officers, online bookings via the hotels' websites, and transactions made through offline travel agencies. The preprocessing of the data was conducted in alignment with the methodologies outlined in prior research \citep{van2015market, csimcsek2018expectation, berbeglia2022comparative}. After preprocessing, the product assortment in each hotel is refined to include between 4 and 10 distinct room types. In terms of scale, the hotel dataset comprises a modest volume, spanning \textit{several thousand} entries, with the largest subset approaching 6,000 records. This size is notably smaller in comparison to the more extensive IRI Academic Dataset that follows.

\paragraph{IRI Academic Dataset}
The IRI dataset consists of real-world consumer purchase transactions for 31 product categories, collected from 47 markets across the United States, and is a main benchmark in the choice modeling field \citep{chen2022decision, jena2022estimation, jagabathula2019limit}. To ensure a fair comparison with previous works, we follow the data preprocessing steps outlined by \citet{jagabathula2019limit} for our experiment. Items within the dataset are marked with their corresponding Universal Product Code (UPC). By identifying and grouping together items that share the same vendor code, specifically the digits 4-8 of the UPC, we categorize them as a single product. This method facilitates the distinction of individual products from unprocessed transaction data. 

In contrast to some previous studies that limited experiments to the top 10 products, we extend our study to the top 20 products, which we think aligns more closely with real-world scenarios. By selecting the top 19 purchased products in the dataset and treating the remaining products as a no-purchase option, we create the transaction records for the experiment. In alignment with the work by \citet{jagabathula2019limit}, we focus on the data from the calendar year 2007 and compile two datasets (short and long-term) containing the first 4 weeks of data and the entire 52 weeks of data in 2007, respectively. 

Contrasting with the more constrained hotel dataset and previous IRI dataset implementations (which typically involved assortments of up to 10 products and a data span of 2 weeks), our expanded real-world scale IRI dataset encompasses a significantly larger scope. It includes \textit{several million} transactions per product category, with the most extensive category comprising around 17 million assortments. This marks a substantial leap from previous IRI research setups, which generally utilized datasets of only a few hundred thousand entries. This expansive scale of data not only underscores the real-world applicability of our study but also sets a new benchmark for comprehensive and realistic experiments in the field of choice modeling.
\subsection{Benchmark Models}

\begin{figure*}[t]
\centering
\includegraphics[width=0.9\textwidth]{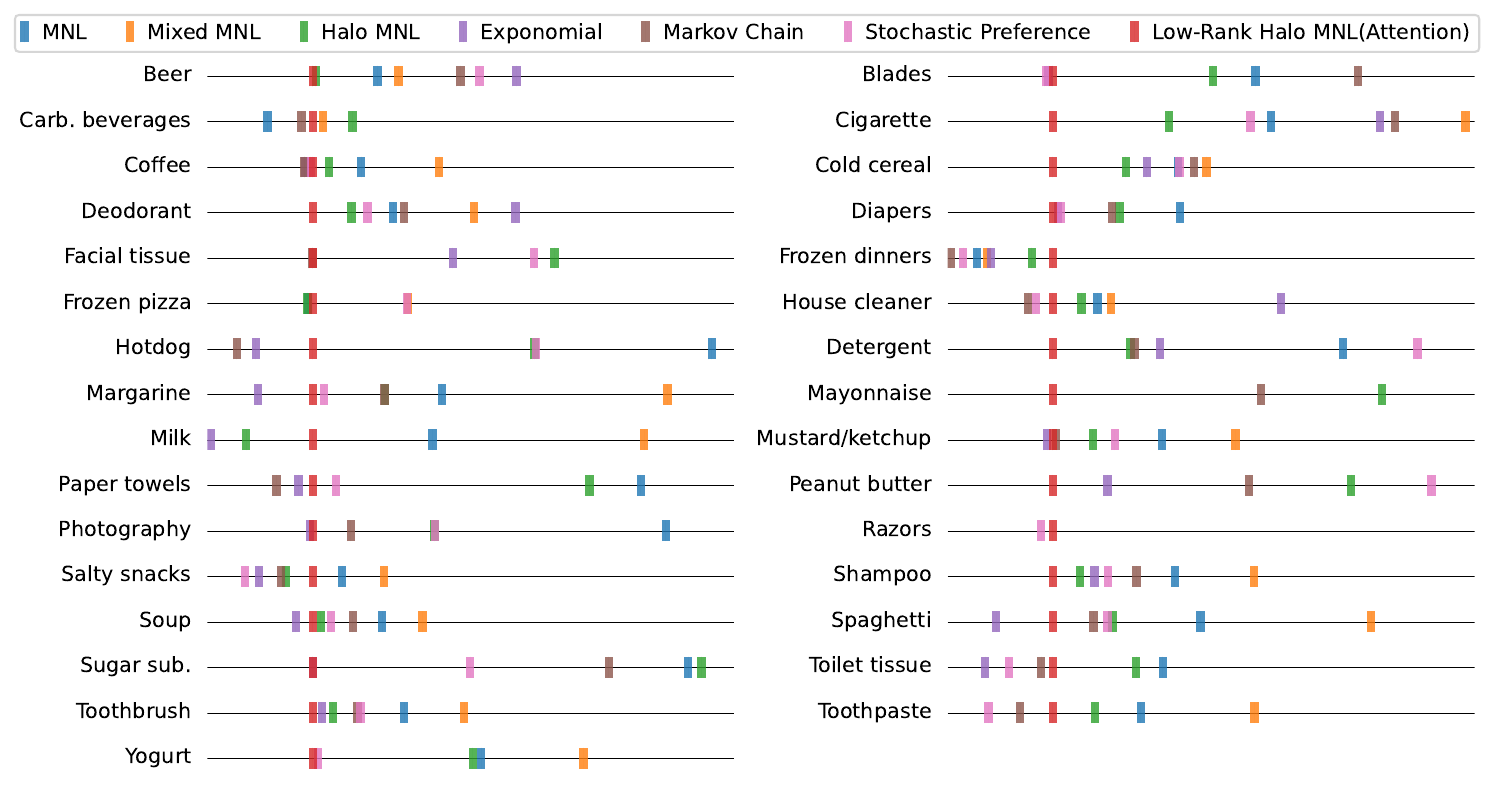} % Reduce the figure size so that it is slightly narrower than the column.
\caption{Detailed breakdown of experimental results on 4 weeks of data. Cross-entropy loss is reported across product categories, normalized to our model (Low-Rank Halo MNL).}
\label{4weeks}
\end{figure*}

Our experiments extend beyond the traditional MNL, Mixed MNL, and Halo MNL models, embracing a wide range of choice models, including recent applications employing deep learning frameworks. The array of models we investigate encompasses the Exponomial model \citep{daganzo2014multinomial, alptekinouglu2016exponomial}, Markov Chain Model \citep{blanchet2016markov}, Stochastic Preference model \citep{mahajan2001stocking}, RUMnet \citep{aouad2022representing}, Decision Forest \citep{chen2022decision}, and several other deep learning architectures adapted for choice models including the ones from \cite{cai2022deep}.

Of particular note are the three top-performing algorithms in our experimental design: the Exponomial model, the Markov Chain model, and the Stochastic Preference model. These have consistently showcased superior performance in previous comparative research \citep{berbeglia2022comparative}, distinguishing themselves among various competing models. Specifically, the Exponomial model has excelled with large in-sample data, the Markov Chain model has proven superior with small in-sample data, and the Stochastic Preference model has demonstrated strong overall performance throughout the study.

\subsection{Experiment Details}

For the hotel dataset, we adhered to the train-test split ratio used in previous studies to ensure a fair comparison. Specifically, we randomly allocated 80\% of the transactions for training and reserved 20\% for testing. In the case of the IRI Academic dataset, the experimental setup was slightly different. To conduct both short-term and long-term analyses effectively, we allocated 70\% of the data for training purposes, following the precedent set by prior research. The remaining 30\% of the data was used for testing to comprehensively evaluate the performance of the models.

To ensure equitable comparisons, we adhere to the implementation of Exponomial, Markov Chain, and Stochastic Preference models as described by \citet{berbeglia2022comparative}. The MNL, Mixed MNL, and Halo MNL models are formulated as neural networks, allowing the models' parameters to be estimated via the neural network's weight parameters. Our proposed Low-Rank Halo MNL is also parameterized as a neural network, employing a self-attention mechanism as outlined in Algorithm \ref{alg:alg_attention}. Consistently for all models, we use cross-entropy loss as the performance metric, and we report all results with averages over three distinct random seeds to ensure the reproducibility and reliability of our training process.

\subsection{Results}

\begin{figure*}[t]
\centering
\includegraphics[width=0.9\textwidth]{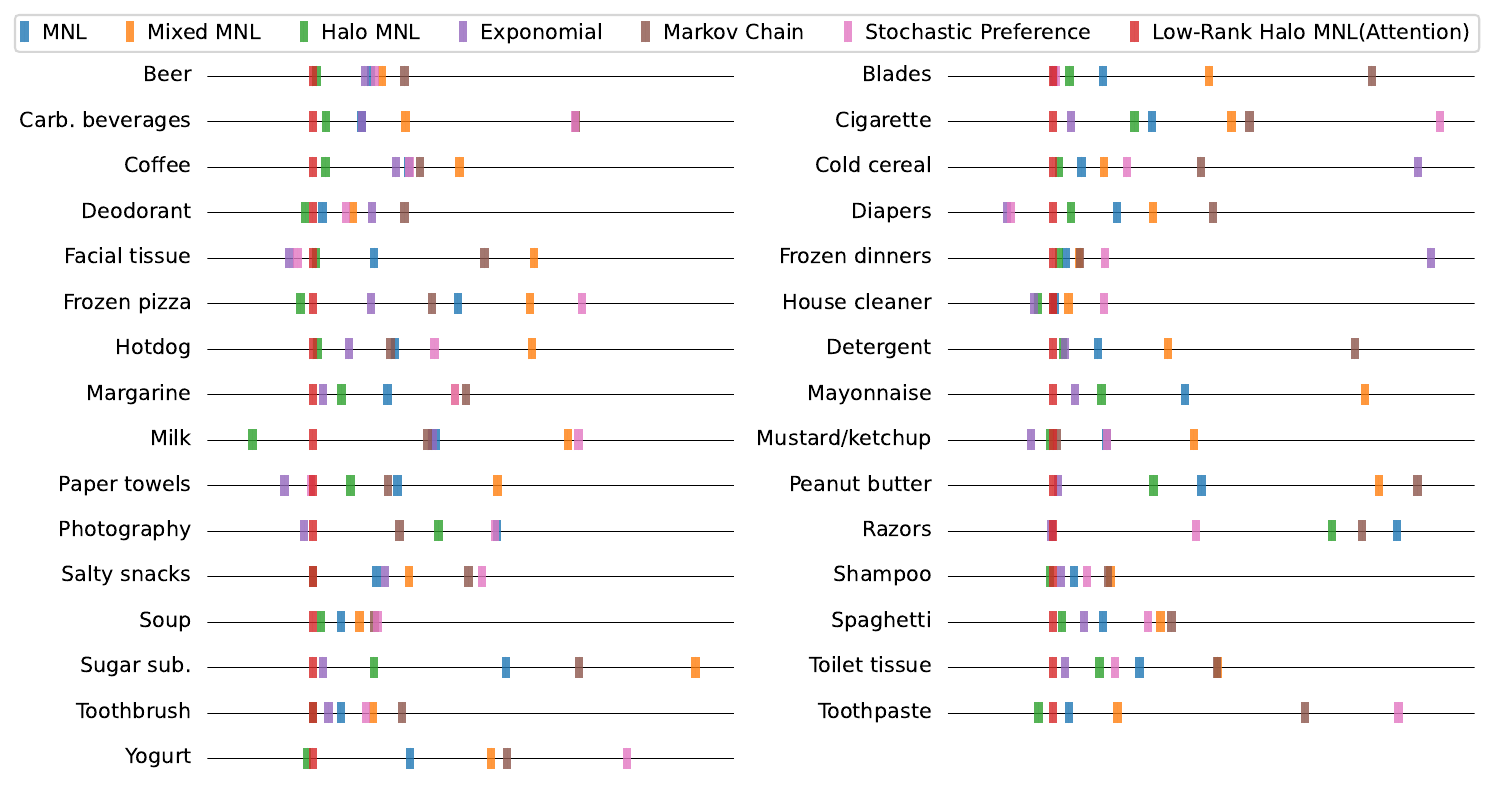} % Reduce the figure size so that it is slightly narrower than the column.
\caption{Detailed breakdown of experimental results on 52 weeks of data. Cross-entropy loss is reported across product categories, normalized to our model (Low-Rank Halo MNL).}
\label{52weeks}
\end{figure*}

\begin{table}[b!]
\caption{Summary of experimental results on 4 weeks of data.}
\label{tab:4weeks_table}
\vskip 0.15in
\begin{center}
\begin{small}
\begin{sc}
\begin{tabular}{lcccr}
\toprule
\textbf{Model} &
\textbf{Wins ($\uparrow$)} &
\textbf{Loss ($\downarrow$)} & \\ 
\midrule
  \textbf{Our Model} &
  \textbf{11} &
  \textbf{0.0\%}\\ 
% \midrule 
MNL &
  0 &
  3.64\%\\
  Mixed MNL &
  0 &
  8.89\%\\
  Halo MNL &
  0 &
  2.19\%\\
  Exponomial &
  10 &
  0.07\%\\
  Markov Chain &
  8 &
  0.66\%\\
  Stochastic Preference &
  2 &
  1.67\%\\
\bottomrule
\end{tabular}
\end{sc}
\end{small}
\end{center}
\vskip -0.1in
\end{table}

The experimental findings for the hotel dataset are concisely presented in \cref{tab:hotel_table}. In hotels 1, 2, and 4, our Low-Rank Halo MNL model demonstrated superior performance compared to other models. Meanwhile, in hotels 3 and 5, the Markov chain model showed better results. However, given the relatively small size of this dataset, the challenge of estimation using any choice model that accounts for irrational behavior is presumably less daunting. Nonetheless, our experiments affirm the successful implementation and estimation of the Low-Rank Halo MNL within this dataset context.

For the IRI Academic dataset, the results are thoroughly outlined in \cref{tab:4weeks_table} and \cref{tab:52weeks_table}, and are visually represented in \cref{4weeks} and \cref{52weeks}. These figures depict the cross-entropy loss for each product category over both 4-week and 52-week periods. Each line in the graph is normalized against the loss value of our model (Low-Rank Halo MNL), anchoring at a consistent point across all lines. The ends of each line represent 98\% and 108\% of our model's loss value, with a position leaning towards the left indicating a superior performance in that particular category.

\begin{table}[b!]
\caption{Summary of experimental results on 52 weeks of data.}
\label{tab:52weeks_table}
\vskip 0.15in
\begin{center}
\begin{small}
\begin{sc}
\begin{tabular}{lcccr}
\toprule
\textbf{Model} &
\textbf{Wins ($\uparrow$)} &
\textbf{Loss ($\downarrow$)} & \\ 
\midrule
\textbf{Our Model} &
\textbf{18} &
\textbf{0.0\%}\\ 
MNL &
0 &
1.54\%\\
Mixed MNL &
0 &
3.49\%\\
Halo MNL &
6 &
0.48\%\\
Exponomial &
7 &
3.88\%\\
Markov Chain &
0 &
3.05\%\\
Stochastic Preference &
0 &
3.54\%\\\bottomrule
\end{tabular}
\end{sc}
\end{small}
\end{center}
\vskip -0.1in
\end{table}

Building on this visual representation, \cref{tab:4weeks_table} and \cref{tab:52weeks_table} quantify these relationships, providing a comprehensive performance overview across product categories. The ‘Number of Wins’ column enumerates instances where each model outperforms others in different product categories. 'Relative Loss' reflects an average, normalized against the mean loss of the Low-Rank Halo MNL, where a lower figure indicates a more favorable overall performance. In the 4-week dataset, our model leads in 11 out of 31 categories, while the Exponomial model is also noteworthy, leading in 10 categories. For the 52-week dataset, our model excels in 18 of the 31 categories, underscoring its enhanced performance in longer-term analyses.

The long-term (52-week) data reveals a particularly interesting pattern where our model prevails in a significant majority of categories. This trend may suggest that the Low-Rank Halo MNL, with its unique self-attention mechanism, is especially capable of capturing the nuances of choice behavior in datasets featuring a broader spectrum of transactions and assortments. Its improved performance in the long-term dataset could be indicative of the model's proficiency in balancing rational and irrational elements of consumer choice behavior.

\section{Conclusion}
Motivated by the lack of modern deep learning techniques in the application area of choice model estimation, we proposed a  choice model that leverages a modern neural network architectural concept (self-attention). We showed that our attention-based choice model is a low-rank generalization of the Halo Multinomial Logit (Halo-MNL) model and proved that whereas the Halo-MNL requires $\Omega(m^2)$ data samples to estimate, our model supports a natural nonconvex estimator (in particular, that which a standard neural network implementation would apply) which admits a near-optimal stationary point with $O(m)$ samples. We then established the first realistic-scale benchmark for choice estimation on real data and used this benchmark to run the largest evaluation of existing choice models to date. Our findings demonstrate that our model consistently excels in both conventional experimental settings and our more expansive experimental frameworks.

\section*{Broader Impact}
This research contributes to the theoretical and empirical landscape of choice modeling, with potential societal implications in this field. However, we have not identified any significant ethical considerations or specific outcomes that require particular emphasis in our study.

\bibliography{icml24}
\bibliographystyle{icml2024}

\clearpage
\appendix
\onecolumn
% \begin{center}
% \LARGE\textbf{Modeling Choice via Self-Attention: Supplementary Material}
% \end{center}
\newpage
% {\hypersetup{linkcolor=black}
% \tableofcontents
% }
\clearpage
\section{Proofs}

\subsection{Proof of  \cref{prop:LB-HaloMNL}}

\printProofs[halomnlsamplecomplexity]

\subsection{Proof of \cref{prop:LB-LowRankHalo}}
\printProofs[lowrankhalomnlsamplecomplexity]

% \subsection{Proof of \cref{prop:LowRank-Attention}}
% \printProofs[lowrankattention]

\subsection{Proof of \cref{thm:UB}}
\printProofs[lowrankestimation]

\section{Details on Experiments}
\subsection{Computational Resources and Implementations}
All experiments were conducted on a server equipped with an Intel(R) Xeon(R) CPU E5-2698 v4 @ 2.20GHz CPU, 512GB RAM, and a Tesla V100 GPU. For models such as the Exponomial, Markov Chain, and Stochastic Preference, the solutions to the optimization problems were determined using the minimize method from the {\it SciPy} library, and the implementation follows that of \citet{berbeglia2022comparative}. Conversely, models representable as neural networks were trained utilizing the {\it PyTorch} library. For detailed implementation, refer to the anonymized repository: https://anonymous.4open.science/r/choicemodel

\subsection{Detailed Results for IRI Academic Datasets}

The experiment results are summarized in  \cref{tab:4weeks_table} and \cref{tab:52weeks_table}. A detailed table, which presents the cross-entropy loss on the test data for each choice model across all 31 product categories using short-term(4-week) and long-term (52-week) data, is provided in \cref{tab:4weeks_full_result} and \cref{tab:52weeks_full_result}.

\begin{table}[h]
\caption{Comparative Analysis of Cross-Entropy Loss on Short-Term (4-Week) Data Across Various Models for 31 Product Categories. The models include Multinomial Logit (MNL), Mixed Multinomial Logit (M-MNL), Halo MNL, Exponential (Exp), Monte Carlo (MC), Stochastic Preference (SP), and Low-Rank Halo MNL.}
\label{tab:4weeks_full_result}
\vskip 0.15in
\centering
\begin{tabular}{cccccccc}
\toprule
Product category & \textbf{MNL} & \textbf{M-MNL} & \textbf{Halo MNL} & \textbf{Exp} & \textbf{MC} & \textbf{SP} & \textbf{Low-Rank Halo MNL}\\
\midrule
\textit{Beer}      & 2.2637                     & 2.2726                            & 2.2375                & 2.3227             & 2.2989             & 2.307             & 2.2363                      \\
\textit{Blades}    & 1.3648                     & 1.4855                            & 1.3541                & 1.3131             & 1.3903             & 1.3126            & 1.3142                      \\
\textit{Carb. beverages}   & 1.8992                     & 1.9193                            & 1.93                  & 1.8574             & 1.9115             & 1.8694            & 1.9156                      \\
\textit{Cigarette}    & 1.4944                     & 1.5473                            & 1.4665                & 1.5241             & 1.5281             & 1.4888            & 1.435                       \\
\textit{Coffee}    & 2.3982                     & 2.4333                            & 2.3838                & 2.3727             & 2.3724             & 2.3755            & 2.3763                      \\
\textit{Cold cereal}   & 2.117                      & 2.1283                            & 2.0966                & 2.1048             & 2.1234             & 2.1177            & 2.068                       \\
\textit{Deodorant}      & 2.286                      & 2.3206                            & 2.2682                & 2.3382             & 2.2907             & 2.275             & 2.2517                      \\
\textit{Diapers}   & 1.1186                     & 1.2376                            & 1.106                 & 1.0932             & 1.1045             & 1.0938            & 1.0922                      \\
\textit{Facial Tissue}   & 1.3868                     & 1.61                              & 1.3072                & 1.2831             & 1.2499             & 1.3024            & 1.2499                      \\
\textit{Frozen Dinners}  & 2.5764                     & 2.5814                            & 2.6036                & 2.5834             & 2.5634             & 2.5696            & 2.6141                      \\
\textit{Frozen Pizza}   & 2.1192                     & 2.1596                            & 2.1191                & 2.0495             & 2.0749             & 2.1591            & 2.1212                      \\
\textit{House cleaner}   & 2.6119                     & 2.6184                            & 2.604                 & 2.7018             & 2.5778             & 2.5815            & 2.5899                      \\
\textit{Hotdog}    & 2.2147                     & 2.3025                            & 2.1454                & 2.0366             & 2.0292             & 2.1461            & 2.0588                      \\
\textit{Detergent}   & 1.5667                     & 1.6362                            & 1.5068                & 1.5151             & 1.5079             & 1.5877            & 1.485                       \\
\textit{Margarine}  & 2.2114                     & 2.3039                            & 2.1882                & 2.1361             & 2.188              & 2.1632            & 2.1586                      \\
\textit{Mayonnaise}      & 1.4549                     & 1.6263                            & 1.4303                & 1.4557             & 1.3994             & 1.5908            & 1.3463                      \\
\textit{Milk}      & 2.2897                     & 2.3796                            & 2.2105                & 2.1956             & 2.1583             & 2.1648            & 2.2389                      \\
\textit{Mustard/ketchup}  & 2.4727                     & 2.5065                            & 2.4408                & 2.4198             & 2.4239             & 2.451             & 2.4225                      \\
\textit{Paper towels}   & 1.7824                     & 1.9528                            & 1.766                 & 1.6734             & 1.6664             & 1.6853            & 1.6779                      \\
\textit{Peanut butter}  & 1.5367                     & 1.7446                            & 1.4966                & 1.4311             & 1.4691             & 1.5181            & 1.4164                      \\
\textit{Photography}     & 1.4929                     & 1.7722                            & 1.4316                & 1.3985             & 1.4094             & 1.4316            & 1.3992                      \\
\textit{Razors}    & 1.2552                     & 1.3447                            & 1.2361                & 1.001              & 1.045              & 1.0893            & 1.0917                      \\
\textit{Salty snacks}  & 2.231                      & 2.2485                            & 2.2072                & 2.1959             & 2.2054             & 2.1902            & 2.2186                      \\
\textit{Shampoo}     & 2.3964                     & 2.4317                            & 2.3542                & 2.3607             & 2.3794             & 2.3669            & 2.3423                      \\
\textit{Soup}      & 2.0374                     & 2.0528                            & 2.014                 & 2.0045             & 2.0262             & 2.0179            & 2.0109                      \\
\textit{Spaghetti}  & 2.1919                     & 2.2608                            & 2.1563                & 2.1092             & 2.1485             & 2.1542            & 2.1322                      \\
\textit{Sugar sub.}  & 1.6618                     & 1.8188                            & 1.6657                & 1.5513             & 1.6384             & 1.5977            & 1.5513                      \\
\textit{Toilet tissue}   & 1.8939                     & 2.0385                            & 1.8845                & 1.8314             & 1.8511             & 1.8397            & 1.8552                      \\
\textit{Toothbrush}   & 2.596                      & 2.6251                            & 2.5615                & 2.5561             & 2.5734             & 2.5748            & 2.5518                      \\
\textit{Toothpaste}   & 1.8519                     & 1.8913                            & 1.8361                & 1.7388             & 1.8103             & 1.7993            & 1.8216                      \\
\textit{Yogurt}    & 1.7372                     & 1.7698                            & 1.7344                & 1.6234             & 1.6083             & 1.6849            & 1.6834\\            
\bottomrule
\end{tabular}
\vspace{-2mm}
\end{table}
\begin{table}[h]
\centering
\caption{Comparative Analysis of Cross-Entropy Loss on Long-Term (52-Week) Data Across Various Models for 31 Product Categories. The models include Multinomial Logit (MNL), Mixed Multinomial Logit (M-MNL), Halo MNL, Exponential (Exp), Monte Carlo (MC), Stochastic Preference (SP), and Low-Rank Halo MNL}
\label{tab:52weeks_full_result}
\vskip 0.15in
\centering
\begin{tabular}{cccccccc}
\toprule
Product category & \textbf{MNL} & \textbf{M-MNL} & \textbf{Halo MNL} & \textbf{Exp} & \textbf{MC} & \textbf{SP} & \textbf{Low-Rank Halo MNL}\\
\midrule
\textit{Beer}            & 2.3421       & 2.347              & 2.3183            & 2.3395              & 2.3569                & 2.3439                         & 2.3166                     \\
\textit{Blades}          & 1.3323       & 1.3589             & 1.3239            & 1.3201              & 1.3997                & 1.3205                         & 1.3198                     \\
\textit{Carb. beverages} & 1.8873       & 1.9029             & 1.8747            & 1.8874              & 1.9633                & 1.9632                         & 1.87                       \\
\textit{Cigarette}       & 1.5028       & 1.5251             & 1.498             & 1.4802              & 1.5301                & 1.5834                         & 1.4751                     \\
\textit{Coffee}          & 2.389        & 2.4116             & 2.352             & 2.3832              & 2.394                 & 2.3893                         & 2.3463                     \\
\textit{Cold cereal}     & 2.2158       & 2.2251             & 2.2062            & 2.3564              & 2.2656                & 2.2346                         & 2.2038                     \\
\textit{Deodorant}       & 2.3277       & 2.3412             & 2.3201            & 2.3495              & 2.3639                & 2.3381                         & 2.3234                     \\
\textit{Diapers}         & 1.1036       & 1.1112             & 1.0942            & 1.0809              & 1.1236                & 1.0818                         & 1.0905                     \\
\textit{Facial Tissue}   & 1.2999       & 1.3389             & 1.2857            & 1.2793              & 1.3268                & 1.2812                         & 1.285                      \\
\textit{Frozen Dinners}  & 2.5215       & 2.5276             & 2.5182            & 2.6956              & 2.5282                & 2.5399                         & 2.5152                     \\
\textit{Frozen Pizza}    & 2.0889       & 2.1167             & 2.0283            & 2.0555              & 2.0789                & 2.1367                         & 2.033                      \\
\textit{House cleaner}   & 2.5497       & 2.5561             & 2.5414            & 2.5393              & 2.5487                & 2.5734                         & 2.5486                     \\
\textit{Hotdog}          & 2.1869       & 2.2428             & 2.1553            & 2.168               & 2.1851                & 2.2031                         & 2.1533                     \\
\textit{Detergent}       & 1.6597       & 1.6815             & 1.6488            & 1.6494              & 1.74                  & 1.8299                         & 1.6457                     \\
\textit{Margarine}       & 2.1486       & 2.1759             & 2.1302            & 2.1229              & 2.1802                & 2.1759                         & 2.1187                     \\
\textit{Mayonnaise}      & 1.4934       & 1.5432             & 1.4703            & 1.4629              & 1.5798                & 1.6527                         & 1.4568                     \\
\textit{Milk}            & 2.1412       & 2.1936             & 2.0685            & 2.14                & 2.138                 & 2.1979                         & 2.0924                     \\
\textit{Mustard/ketchup} & 2.4154       & 2.455              & 2.3896            & 2.3812              & 2.393                 & 2.4157                         & 2.3911                     \\
\textit{Paper towels}    & 1.8285       & 1.8627             & 1.8125            & 1.79                & 1.8253                & 1.7991                         & 1.7996                     \\
\textit{Peanut butter}   & 1.6188       & 1.6718             & 1.6045            & 1.5758              & 1.6834                & 1.7639                         & 1.5744                     \\
\textit{Photography}     & 1.4823       & 1.604              & 1.4665            & 1.43                & 1.4559                & 1.4818                         & 1.4323                     \\
\textit{Razors}          & 0.9406       & 1.0368             & 0.9297            & 0.8827              & 0.9348                & 0.907                          & 0.883                      \\
\textit{Salty snacks}    & 2.031        & 2.0434             & 2.0069            & 2.0342              & 2.066                 & 2.0712                         & 2.0068                     \\
\textit{Shampoo}         & 2.4662       & 2.4832             & 2.455             & 2.4601              & 2.482                 & 2.4722                         & 2.4563                     \\
\textit{Soup}            & 2.1366       & 2.144              & 2.1284            & 2.5279              & 2.15                  & 2.1512                         & 2.1251                     \\
\textit{Spaghetti}       & 2.2368       & 2.261              & 2.2195            & 2.2289              & 2.2656                & 2.2558                         & 2.2158                     \\
\textit{Sugar sub.}      & 1.5231       & 1.5759             & 1.4863            & 1.4721              & 1.5435                & 1.6321                         & 1.4693                     \\
\textit{Toilet tissue}   & 1.914        & 1.9421             & 1.8998            & 1.8876              & 1.9418                & 1.9052                         & 1.8832                     \\
\textit{Toothbrush}      & 2.5752       & 2.5909             & 2.5617            & 2.5691              & 2.6049                & 2.5873                         & 2.5615                     \\
\textit{Toothpaste}      & 1.8422       & 1.8592             & 1.8316            & 1.9929              & 1.9244                & 1.9572                         & 1.8367                     \\
\textit{Yogurt}          & 1.9069       & 1.9356             & 1.8703            & 3.1619              & 1.9414                & 1.9838                         & 1.8723\\
\bottomrule
\end{tabular}
\vspace{-2mm}
\end{table}

\end{document}